\pgfplotsset{width=13cm,compat=1.9}
\newcommand\ci{\perp\!\!\!\perp}
\theoremstyle{remark}
\theoremstyle{definition}
\begin{document}

\title{FLOSS: Federated Learning with Opt-Out \\and Straggler Support} 



\author{David J. Goetze}
\email{david.goetze@gmail.com}
\affiliation{%
  \institution{Williams College, USA}
  \country{}
}

\author{Dahlia J.~Felten}
\email{df15@williams.edu}
\affiliation{%
  \institution{Williams College, USA}
  \country{}
}

\author{Jeannie R.~Albrecht}
\email{jra1@williams.edu}
\affiliation{%
  \institution{Williams College, USA}
 \country{}
 }

\author{Rohit Bhattacharya}
\email{rb17@williams.edu}
\affiliation{%
  \institution{Williams College, USA}
  \country{}
  }

\renewcommand{\shortauthors}{Goetze, Felten, Albrecht, and Bhattacharya}

\begin{abstract}
   Previous work on data privacy in federated learning systems focuses on privacy-preserving operations for data from users who have agreed to share their data for training. However, modern data privacy agreements also empower users to use the system while opting out of sharing their data as desired. When combined with stragglers that arise from heterogeneous device capabilities, the result is missing data from a variety of sources that introduces bias and degrades model performance. In this paper, we present FLOSS, a system that mitigates the impacts of such missing data on federated learning in the presence of stragglers and user opt-out, and empirically demonstrate its performance in simulations. 
\end{abstract}

\begin{CCSXML}
<ccs2012>
   <concept>
       <concept_id>10002978.10003029.10011150</concept_id>
       <concept_desc>Security and privacy~Privacy protections</concept_desc>
       <concept_significance>500</concept_significance>
       </concept>
   <concept>
       <concept_id>10010147.10010257.10010282</concept_id>
       <concept_desc>Computing methodologies~Learning settings</concept_desc>
       <concept_significance>500</concept_significance>
       </concept>
   <concept>
       <concept_id>10010147.10010178.10010219.10010223</concept_id>
       <concept_desc>Computing methodologies~Cooperation and coordination</concept_desc>
       <concept_significance>300</concept_significance>
       </concept>
 </ccs2012>
\end{CCSXML}

\ccsdesc[500]{Security and privacy~Privacy protections}
\ccsdesc[500]{Computing methodologies~Learning settings}
\ccsdesc[300]{Computing methodologies~Cooperation and coordination}



\keywords{federated learning, missing data, privacy}



\settopmatter{printfolios=true} 
\maketitle
\pagestyle{empty}
\thispagestyle{empty}

\section{Introduction}
\label{sec:intro}





Federated learning (FL) is a privacy-preserving form of machine learning in which a model is trained across a distributed set of clients, eliminating the need for individual users to share their data with a central server~\cite{mcmahan17}. Instead, each participant trains a local model and sends only weights or gradients back to the server.  The server aggregates these to update the central model and broadcasts it back to the clients in each training round.  Since sensitive data are not sent to the server, this approach helps mitigate privacy risks associated with centralized data storage and transfer.

While FL systems offer advantages with respect to privacy, their distributed nature introduces several challenges related to missing data. Some gradients may be lost or delayed due to problems with the devices or network.  The presence of these {\it stragglers} in distributed computing is a well-studied problem~\cite{mapreduce04, albrecht06} that causes missing data in FL systems~\cite{oort21, refl23, fedprox20}.  However, beyond just infrastructure-level connectivity issues, users of FL systems may also decide to opt out of gradient/weight sharing for increased privacy. Modern data privacy agreements give users the ability to change their mind and opt in or opt out as desired at any point during training.  As shown in Figure~\ref{fig:teaser}, some participants may elect to withhold their data, thus preventing the central model from using it in model updates.  When this occurs, the model may become biased and its accuracy may suffer as a result.  

\begin{figure*}[t]
  \centering
    \begin{subfigure}[height=1.5in]{0.24\textwidth}
    \centering
    \includegraphics[scale=0.4]{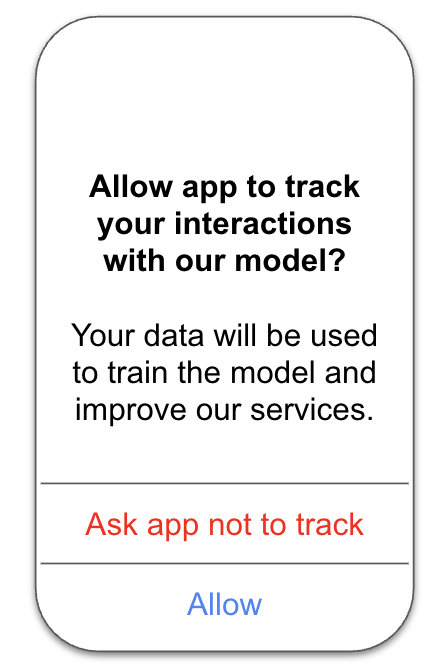}
    \caption{\vspace{0em}}
    \end{subfigure}
    \begin{subfigure}[height=1.5in]{0.24\textwidth}
    \centering
    \includegraphics[scale=0.4]{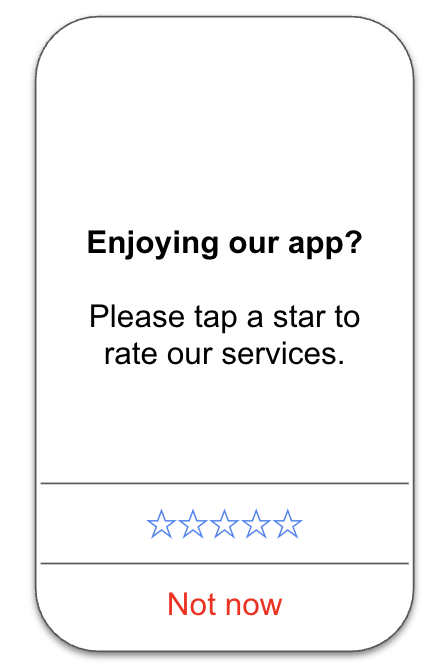}
    \caption{\vspace{0em}}
    \end{subfigure}
    \begin{subfigure}[height=1.5in]{0.5\textwidth}
    \centering
    \includegraphics[scale=0.36]{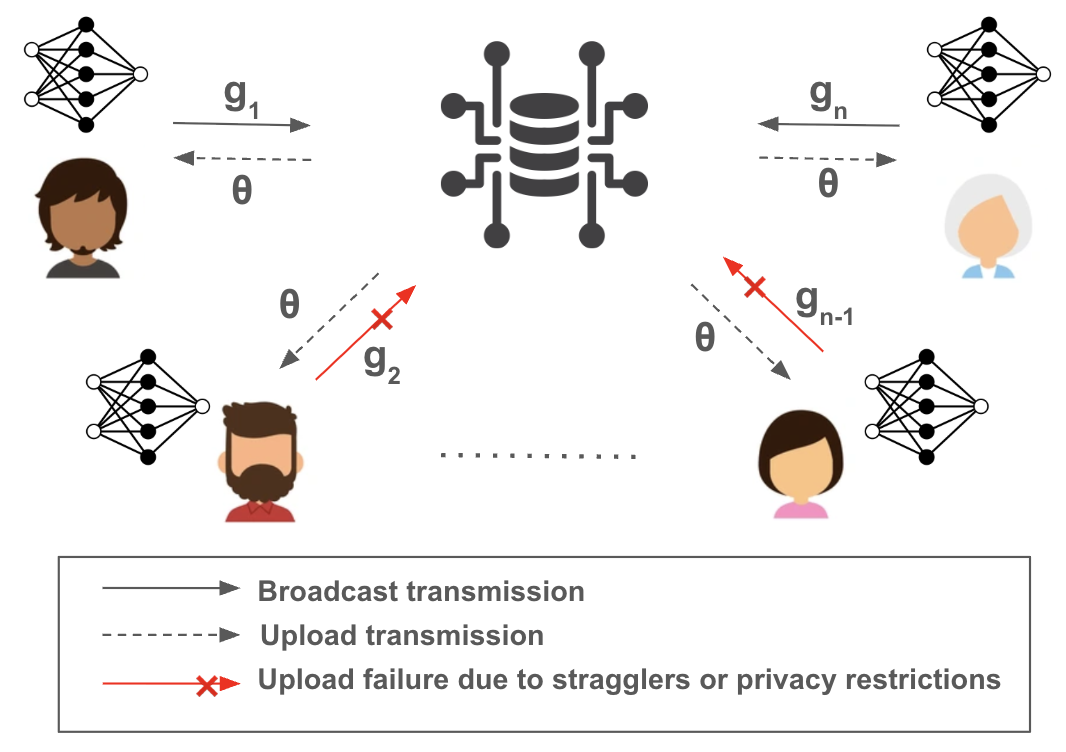}
    \caption{\vspace{0em}}
    \end{subfigure}
  \caption{\small (a) A prompt allowing users to opt out of training, and (b) a prompt asking for user feedback. In addition to stragglers, either of these can lead to missing data when users select the red option. (c) This affects the FL system, as some gradients $g_i$ are systematically missing, introducing bias into the model's weights ($\theta$) at each update step.}
  \Description{}
  \label{fig:teaser}
\end{figure*}

Although model training is typically robust to  {\it missing completely at random (MCAR)} data, {\it missing at random (MAR)} and {\it missing not at random (MNAR)} data are more problematic.  MAR in the FL context implies that the likelihood of user data being excluded is not related to the missing data itself, but still systematically different based on observable device or network properties. For example, straggling participants in rural areas with poor network connectivity may be excluded from training. MNAR implies that the tendency for data to be excluded is related to the missing data itself. For instance, participants from a specific demographic class who possess data not represented elsewhere may opt out of training.  


Thus, it is generally not safe to assume that data are MCAR in FL systems, and the selection bias from MAR and MNAR data can negatively impact the performance of the model. Our work aims to address this problem.  Specifically, we leverage 
modern theory in inverse probability weighting (IPW)~\cite{horvitz1952generalization, seaman2013review} and missing data graphical models~\cite{mohan2021graphical, miao2024identification, nabi2025sinica} in order to reweight the gradient aggregation in FL systems and mitigate the impacts of MCAR, MAR, and MNAR missing data while preserving user privacy, thereby improving the overall usability of FL for practical applications.  

To this end, we present \textbf{FLOSS}: a privacy-preserving FL system for mitigating the impacts of missing data without forcing additional data collection or violating user data-sharing agreements.  We present a formal model of missing data in FL systems, and describe how we support opt-out user privacy policies using reweighted selection.  We also provide preliminary results from a prototype implementation that evaluates our ability to correct for missing data.  

\section{Notation and Problem Setup}
\label{sec:notation}

We set up the notation used in our paper as follows.
\begin{align*}
&X : \text{a set of features used for generating predictions}. \\
&Y: \text{the outcome of interest (real-valued or categorical)}. \\
&D: \text{user info collected at sign-up---{\it e.g.}, age + device specs}. \\
&S: \text{user satisfaction with system and model performance}. \\
&R: \text{binary indicator of responsiveness to server requests} \\
&\text{\hspace{0.4cm} $R=0$ for stragglers/users opting out; $R=1$ otherwise}.
\end{align*}

In a typical FL setup, we have a set of $n$ users $\mathcal U$, each with their own private dataset consisting of multiple realizations of the features $X$ and outcome $Y$. A model $h_\theta: x \mapsto y$ is then trained in a decentralized manner to minimize the expected loss $E[L(x,y, \theta)]$, often approximated by the empirical risk $\frac{1}{n} \sum_{i=1}^n L(x_i, y_i, \theta)$, over multiple rounds. Moving forward, we suppress dependence of the loss and gradient functions on the data $x, y$ for brevity. At each step $t$, the central server samples a subset of users $\mathcal U'$ of size $k$, and requests  gradients $G(\theta^{(t)})$ of the loss function evaluated over their private datasets. Each sampled  device uploads their gradients $g_1(\theta^{(t)}), \dots, g_k(\theta^{(t)})$, or noisy and clipped versions of them to add differential privacy \cite{abadi2016deep, birrell2024differentially, das2025security}. The central server then aggregates 
these gradients to obtain $\overline{g}(\theta^{(t)})$ and updates the model as $\theta^{(t+1)} \gets \theta^{(t)} - \eta \cdot \overline{g}(\theta^{(t)})$, where $\eta$ is the learning rate. Note we assume equal-sized datasets (i.e., $X$ and $Y$ are understood to contain an equal number of realizations of features and outcomes for each user) for brevity of notation, but our methodology generalizes in a straightforward manner. Finally, the updated model $h_{\theta^{(t+1)}}$ is  broadcast to all users $u \in {\mathcal U}$, and the process repeats.

Two key issues arise in the above process that lead to missing data: (i) Some devices, known as stragglers \cite{albrecht06, mapreduce04},  may fail to upload their gradients in a reasonable time frame and the server is forced to perform the aggregation step without them, and (ii) certain users may decline to share their data for training as part of a data-sharing agreement, so not all devices can be prompted for their gradients. Both of these issues can lead to \emph{systematic}, rather than completely random, missingness of gradients.

{When the data are systematically missing, performing gradient aggregation based on just the observed data leads to biased estimates relative to the aggregate that would be obtained had everyone been able and willing to share their data (possibly contrary to fact). That is, we use the term \emph{bias} here in a statistical sense---a tendency of parameter and gradient estimates computed from just observed data to deviate from the values that would be obtained had there been no missing data at all. This kind of bias arising due to missing data (and the closely related problem of selection bias) is well documented in the causal inference literature, as it often leads to biased estimation of causal effects \cite{mohan2013missing, nabi2025sinica}. On the other hand, while there has been preliminary work on missing data in FL systems when the data are missing completely at random \cite{valdeira2025vertical}, to our knowledge, the issues that arise in FL systems due to systematic missingness---and how to mitigate such issues---have not been studied thus far. We note also that bias due to missing data and bias in the sense of whether an algorithm is fair (or not) are, in fact, related problems; it has also been shown that mitigating the effects of missing data can also lead to improved fairness \cite{kallus2018residual, goel2021importance}. Thus, in this work we focus our attention on mitigating bias due to missing data, with the understanding that this can, in some cases, also lead to improved performance on certain measures of algorithmic fairness.} 

In the following, we formalize the kind of data-sharing agreements we support in FLOSS, which reflects mandatory user privacy agreements that are ubiquitous across modern machine learning applications \cite{das2025security}.

\paragraph{Data-sharing agreement} For all users, the actual values of the features and outcomes present in their individual datasets are never shared with the central server, {\it i.e.}, the fine-grained data are always private. Further, if a user opts out of collaborative training of the model $h_\theta$, then any outputs obtained by running this model on their data will also not be shared with the central server. This includes coarse-grained outputs of the model, such as losses and gradients.


\begin{figure*}[t]
    \centering
    \scalebox{0.8}{
 
        \begin{tikzpicture}[>=stealth, node distance=1.5cm]
        \tikzstyle{square} = [draw, thick, minimum size=1.0mm, inner sep=3pt]
        \begin{scope}
            \path[->, very thick]
            node[] (d) {$D$}
            node[right of=d] (x) {\color{red} $X$}
            node[right of=x] (y) {\color{red}  $Y$}
            node[right of=y, xshift=0.5cm] (l) {\color{red}  $G(\theta)^{miss}$}
            node[right of=l] (r) {$R$}
            node[below of=l, yshift=0.5cm] (s) {\color{white} $S^{miss}$}
            node[below of=y] (label) {\textbf{(a)}}

            
            (x) edge[] (y)
            (x) edge[double, blue, bend left] (l)
            (y) edge[double, blue] (l)

            (x) edge[bend right=40] (r)
            (y) edge[bend right=25] (r)
            (d) edge[] (x)
            (d) edge[bend left=25] (y)
            (d) edge[bend left=35] (r)
            
            ;
        \end{scope}
    

 
        \begin{scope}[xshift=10cm]
            \path[->, very thick]
            node[] (d) {$D'$}
            node[right of=d] (z) {$Z$}
            node[right of=z] (x) {\color{red} $X$}
            node[right of=x] (y) {\color{red} $Y$}
            node[right of=y, xshift=0.5cm] (l) {\color{red} $G(\theta)^{miss}$}
            node[right of=l] (r) {$R$}
            node[below of=l, yshift=0.5cm] (s) {\color{red} $S^{miss}$}
            node[below of=y, xshift=-0.5cm] (label) {\textbf{(b)}}
            
            (x) edge[] (y)
            (x) edge[double, blue, bend left] (l)
            (y) edge[double, blue] (l)

            (x) edge[bend right=15] (s)
            (y) edge[bend right=15] (s)
            (d) edge[bend right=20] (x)
            (d) edge[bend left=25] (y)
            (d) edge[bend left=35] (r)
            (s) edge[bend right=15] (r)
            (d) edge[] (z)
            (z) edge[] (x)
            (z) edge[bend right] (y)
            (d) edge[bend right=15] (s)
            ;
        \end{scope}
    \end{tikzpicture}}

    \vspace{0em}
    \caption{\small m-DAGs showing (a) gradients are likely MNAR in FL, and (b)  assumptions for missing data correction in FLOSS.}
    \label{fig:m-dags}
\end{figure*}
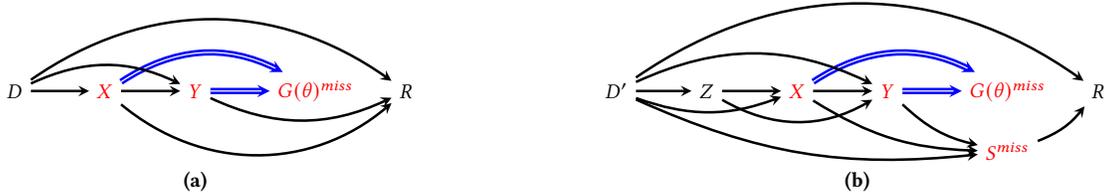

\section{A Formal Model of Missing Data in FL}

We use missing data directed acyclic graphs (m-DAGs) \citep{nabi2025sinica, mohan2021graphical} to provide a formal yet intuitive understanding of the impacts of missing data due to stragglers and user opt-out in FL systems, and to propose possible solutions.

An m-DAG $\mathcal G(V, E)$ is a DAG whose vertices $V$ correspond to random variables (or sets of random variables), some of which may be missing or completely unobserved, and whose edges $E$ encode substantive causal relations between these variables. In particular, the presence of a directed edge $A\rightarrow B$ implies that $A$ is a \emph{potential} cause of $B$ relative to other variables in $V$; the absence of such an edge implies that $A$ is \emph{not} a direct cause of $B$ relative to other variables in $V$.

The absence of edges in an m-DAG also encode statistical relations between the variables via the well-known d-separation criterion \cite{pearl1988probabilistic} defined as follows. A \emph{path} in an m-DAG $\mathcal G$ is an alternating sequence of vertices and edges $V_1 {\ \bf -\ } V_2 {\ \bf -\ } V_3, \dots,  V_K$, where each ``$-$'' in the sequence  is an edge $V_k \leftarrow V_{k+1}$ or $V_k \rightarrow V_{k+1}$ that exists in $\mathcal G$, and every vertex and edge in the sequence appears at most once. A vertex $V_k$ is said to be a {\it collider} on the path if the preceding and succeeding edges both point into it, {\it i.e.}, the path contains $V_{k-1} \rightarrow V_k \leftarrow V_{k+1}$. 
Given disjoint sets of vertices $A, B$ and $C$, the sets $A$ and $B$ are said to be d-separated given $C$, denoted $A \ci_{\text{d-sep}} B \mid C$, if and only there is no path from a vertex in $A$ to one in $B$ along which (i) every collider on the path is either in $C$ or has a descendant 
in $C$ and (ii) every non-collider on the path is not in $C$. 
Paths satisfying conditions (i) and (ii) are said to be \emph{open}. This definition leads to the following \emph{global Markov property} of m-DAGs---given disjoint sets $A, B, C$ we have, $A \ci_{\text{d-sep}} B \mid C \implies A \ci B \mid C \text{ in } p(V)$. That is, d-separation in $\mathcal G$ implies conditional independences in the probability distribution on the random variables displayed in the m-DAG. This gives us an intuitive way to reason about missingness in FL systems, as we now demonstrate.

\subsection{m-DAG Representation of Missingness in FL}
In Figure~\ref{fig:m-dags}(a), we propose an m-DAG relevant to our FL setup. We use red to mark variables that may be unobserved to the central server. In FL, the features $X$ and outcome $Y$ are completely unobserved to the central server so they are marked as red. Further, the gradients 
are also marked red, as they are missing for stragglers as well as any users who opt out of sharing their data; we use a superscript to distinguish this from the fully hidden case, as $G(\theta)^{miss}$. The blue edges $X {\color{blue} \implies} G(\theta)^{miss} {\color{blue} \impliedby} Y$ are used to highlight that the gradients are obtained as outputs of applying $h_\theta$ to $X, Y$, thus triggering the data-sharing agreement. All other variables---user info $D$, and the binary indicator $R$ denoting whether the central server is able to receive gradient data from a user device ($R=1$ for yes and $R=0$ for no)---are  fully observed.

We now justify why the missing gradients cannot be considered missing completely at random (MCAR) in FL systems. The data are considered MCAR if $R \ci G(\theta)^{miss}$ \cite{rubin1976inference}. In Figure~\ref{fig:m-dags}(a), we see that heterogeneity in user devices and demographics can influence the missingness indicator $R$, encoded by the edge $D\rightarrow R$. Further, $D$ can also influence the kinds of data $X, Y$ users process  on their device. Thus, we have a few open paths between $R$ and $G(\theta)^{miss}$---{\it e.g.}, $R \leftarrow D \rightarrow X {\color{blue} \implies} G(\theta)^{miss}$, implying $R \not\ci G(\theta)^{miss}$ by d-separation. These open paths must be blocked by adjusting for the covariates $D$ to mitigate bias resulting from aggregating gradients from only non-straggling devices. 

However, there is another complication arising from user opt-out that likely results in gradients that are missing not at random (MNAR). The data are MNAR if missingness is not independent of the missing variable given observed covariates alone, {\it i.e.}, $R \not\ci G(\theta)^{miss} \mid D$ in our FL setup. This occurs when user opt-out is influenced by the data $X, Y$ itself---{\it e.g.}, a user may not want to share interactions with the model $h_\theta$ involving sensitive data $X$ or if they are dissatisfied with model predictions of their outcomes $Y$---encoded by the edges $X\rightarrow R \leftarrow Y$. This leads to open paths---{\it e.g.}, $R \leftarrow Y {\color{blue} \implies} G(\theta)^{miss}$---that imply $R \not\ci G(\theta)^{miss} \mid D$.

Thus, we have established missing gradients in FL systems are likely to be MNAR. The following proposition formalizes how this degrades  FL accuracy if missingness is ignored.

\begin{proposition}
    Let $m < n$ denote the number of responsive devices. Model updates using only observed gradients do not approximate minimization of the true unobserved risk $E[L(\theta)^{miss}]$, even as $m\to\infty$ for the missingness in Figure~\ref{fig:m-dags}(a).
\end{proposition}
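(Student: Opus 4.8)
The plan is to show that the naive aggregation over responsive devices converges to a biased target, and that this bias is precisely the obstruction to minimizing the true risk. First I would invoke standard regularity (dominated convergence) to interchange gradient and expectation, so that the ideal descent direction is $\nabla_\theta E[L(\theta)^{miss}] = E[G(\theta)^{miss}]$, and the stationary points of the true risk are the zeros of this marginal gradient. In contrast, the server's observed-only update $\overline{g}(\theta) = \frac{1}{m}\sum_{i : R_i = 1} g_i(\theta)$ averages an i.i.d.\ sample drawn from the distribution conditioned on $R=1$; by the law of large numbers, $\overline{g}(\theta) \to E[G(\theta)^{miss} \mid R=1]$ in probability as $m\to\infty$. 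The whole argument therefore reduces to comparing the conditional target $E[G(\theta)^{miss}\mid R=1]$ against the marginal $E[G(\theta)^{miss}]$.

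Next I would make the bias explicit via the law of total expectation. Writing $E[G(\theta)^{miss}] = E[G(\theta)^{miss}\mid R=1]\,P(R=1) + E[G(\theta)^{miss}\mid R=0]\,P(R=0)$ and rearranging yields
\[
E[G(\theta)^{miss}\mid R=1] - E[G(\theta)^{miss}] = P(R=0)\big(E[G(\theta)^{miss}\mid R=1] - E[G(\theta)^{miss}\mid R=0]\big).
\]
This gap is nonzero precisely when some devices are unresponsive ($P(R=0)>0$) and the gradient distribution differs across the responsive and unresponsive subpopulations. The latter is exactly the dependence $R \not\ci G(\theta)^{miss}$ that the d-separation analysis of Figure~\ref{fig:m-dags}(a) already established through the open paths $R \leftarrow D \rightarrow X {\color{blue}\implies} G(\theta)^{miss}$ and $R \leftarrow Y {\color{blue}\implies} G(\theta)^{miss}$. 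Because this gap does not depend on $m$, it cannot be washed out by collecting more responsive devices.

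To turn the nonzero bias into the stated non-approximation, I would evaluate at a minimizer $\theta^\star$ of the true risk, where $E[G(\theta^\star)^{miss}] = 0$. Substituting into the decomposition gives $E[G(\theta^\star)^{miss}\mid R=1] = -\frac{P(R=0)}{P(R=1)}\,E[G(\theta^\star)^{miss}\mid R=0]$, so the limiting observed-only descent direction at $\theta^\star$ is nonzero whenever the full-population optimum fails to be a stationary point for the opt-out subpopulation, i.e.\ $E[G(\theta^\star)^{miss}\mid R=0]\neq 0$. Hence $\theta^\star$ is not a fixed point of the observed-only dynamics: the procedure actively moves away from the true minimizer even in the infinite-device limit, which is the claim.

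The main obstacle is this final genericity step---passing from the qualitative dependence $R\not\ci G(\theta)^{miss}$ to the quantitative claim $E[G(\theta^\star)^{miss}\mid R=0]\neq 0$ at the specific point $\theta^\star$. Dependence guarantees the conditional and marginal gradients differ for \emph{some} $\theta$, but not automatically at the true optimum, since a degenerate cancellation could in principle occur there. I would close this either by arguing that such exact cancellation is nongeneric (holding only on a measure-zero set of model and missingness parameters), or by exhibiting a concrete instance consistent with Figure~\ref{fig:m-dags}(a)---for example a linear model whose opt-out mechanism depends on $Y$---in which the bias term is computed in closed form and shown to be nonzero. Everything else is routine once the LLN limit and the total-expectation decomposition are in place.
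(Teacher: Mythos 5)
Your proof is correct and rests on the same core mechanism as the paper's: by the law of large numbers, the observed-only procedure targets a conditional-on-$R=1$ quantity, which differs from the marginal target when data are not MCAR. The difference is where each argument lives. The paper's sketch runs entirely at the level of the risk---observed-only aggregation is ERM for $\frac{1}{m}\sum_i R_i L(x_i,y_i,\theta)^{miss}$, which converges to $E[L(\theta)^{miss}\mid R=1]$, declared ``in general'' unequal to $E[L(\theta)^{miss}]$---whereas you run it at the level of gradients, make the bias explicit via the total-expectation decomposition, and then evaluate at a true minimizer $\theta^\star$ to conclude that $\theta^\star$ is not even a fixed point of the observed-only dynamics. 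Your version buys a sharper and more operational conclusion (the update rule actively moves away from the true optimum, with bias quantified by $-\frac{P(R=0)}{P(R=1)}E[G(\theta^\star)^{miss}\mid R=0]$) at the cost of mild extra regularity (interchanging gradient and expectation). Importantly, the genericity gap you flag---that $R \not\ci G(\theta)^{miss}$ guarantees the conditional and marginal quantities differ for \emph{some} $\theta$ but not necessarily at $\theta^\star$---is equally present in the paper's proof, hidden in the phrase ``in general not equal''; your proposed remedies (a measure-zero cancellation argument, or a concrete linear model with $Y$-dependent opt-out exhibiting nonzero bias in closed form) are exactly what would be needed to upgrade either sketch to a complete proof.
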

\begin{proof} (Sketch) Using  gradients from just observed devices is equivalent to solving an empirical risk minimization problem with risk $\frac{1}{m}\sum_{i=1}^n R_iL(x_i, y_i \theta)^{miss}$. As $m\to\infty$, this converges to $E[L(\theta)^{miss} \mid R=1]$, which is in general not equal to $E[L(\theta)^{miss}]$ when data are not MCAR, as in Figure~\ref{fig:m-dags}(a).
\end{proof}

That is, simply increasing the number of observed devices does not address the problem of systematic missingness in FL systems. We propose a solution for this in the next section.

\section{Reweighted Device Selection}
\label{sec:reweight}

Inverse probability weighting (IPW)---weighting observed cases by the inverse of their probability of being observed---is a common approach to unbiased estimation in missing data problems~\cite{horvitz1952generalization, seaman2013review}. Note that if missingness was only a function of device and user attributes $D$, we could estimate the required weights for IPW $1/p(R=1\mid D)$ using observed data alone. However, to keep our method as general as possible, we will allow for dependence on any of $X, Y, D$, which naturally allows for dependence on just $D$ as a special case. 

It is well known that unbiased inference with MNAR data is impossible without any assumptions \cite{rubin1976inference, nabi2025sinica}. Here, we will assume that the dependence of $R$ on $X$ and $Y$ is mediated by the user's (dis)satisfaction with their interactions with the system, {\it i.e.}, their willingness to share data is mediated by how well the model is performing at mapping their input features to outcomes. User satisfaction is typically already measured intermittently in modern FL applications via prompts of the kind shown in Figure~\ref{fig:teaser}(b). Note we make no assumptions about the functional form of dependence 
and instead estimate it from data. We also allow user satisfaction to be missing due to device unresponsiveness, or the user simply choosing not to provide  feedback. These assumptions are captured by the m-DAG in Figure~\ref{fig:m-dags}(b) with the addition of the variable $S^{miss}$ and associated edges.

\begin{algorithm}[t]
\caption{FLOSS Pseudocode}
\begin{algorithmic}[1]
\small
\State \textbf{Sign-up}: Record basic user info $D$ on  central server
\State \textbf{Initialize} $\theta^{(0)}$ (random or pre-trained) and broadcast it
\vspace{0.5em}
\For {each round/epoch of FL}
\State \textbf{Prompt} all users $u \in \mathcal U$ for participation, record $R$
\State \textbf{Prompt} all users $u \in \mathcal U$ for satisfaction, record $S^{miss}$
\State \textbf{Compute} ${\bf \pi} \coloneqq p(R=1\mid D', S^{miss})$ by solving \eqref{eq:shadow_equations}
\State Define ${\mathcal U}_R$ as users $u \in \mathcal U$ such that $R=1$
\vspace{0.5em}
\For{$i$ from $1$ to $max\ iterations$}
    \State \textbf{Weighted sampling} of $k$ users w/ replacement
    \Statex \hspace{1cm} from ${\mathcal U}_R$ using  1/$\pi$ as weights
    \State Locally compute gradients $g_1^{(t)}, \dots, g_k^{(t)}$
    \State \textbf{Upload} noisy, clipped gradients $\widetilde{g}_1^{(t)}, \dots, \widetilde{g}_k^{(t)}$
    \State \textbf{Timeout stragglers} after a fixed cutoff
    \State Aggregate non-straggler gradients to obtain $\overline{g}^{(t)}$
    \State \textbf{Update} $\theta^{(t+1)} \gets \theta^{(t)} - \eta \cdot \overline{g}^{(t)}$
    \State \textbf{Broadcast} $\theta^{(t+1)}$ to all users $u \in \mathcal U$
\EndFor
\EndFor

\vspace{0.5em}

\State \bf{return} $\theta^{(t)}$ from last update

\end{algorithmic}
\label{alg:floss}
\end{algorithm}

Under this model, we need to estimate the probability of missingness $p(R=1\mid D, S^{miss})$, which is still a function of missing variables corresponding to MNAR data. To make progress, say there is a variable $Z \in D$ such that (i) $Z \not\ci S^{miss} \mid R, D'$ and (ii) $Z \ci R \mid S^{miss}, D'$, where $D' = D\setminus \{Z\}$. Such a variable, known as a \emph{shadow variable} \cite{miao2024identification, chen2023causal}, is shown in Figure~\ref{fig:m-dags}(b). That is, $Z$ is a variable such as device processing power that might affect what kinds of data are processed on it, but does not necessarily drive missingness, which is instead affected by other device attributes in $D'$, such as network card specs determining network connectivity. With such a shadow variable,  it is possible to estimate $\pi \coloneqq p(R=1 \mid D', S^{miss})$ using results in \cite{miao2024identification, chen2023causal} by solving for parameters $\beta$ in a system of equations, where each equation is of the form
{\small
\begin{align}
    E\bigg[\bigg( \frac{R}{p_\beta(R=1\mid D', S^{miss})} - 1 \bigg) \cdot f_i(D', Z) \bigg]= 0,
    \label{eq:shadow_equations}
\end{align}}
and $f_i, \dots, f_q$ are any non-redundant functions of $D', Z$; more equations correspond to more complex parameterizations $\beta$  for $p(R=1\mid D', S^{miss})$. %
Note $R$ in the numerator of \eqref{eq:shadow_equations} ensures estimation usage of just observed data.

Using estimated probabilities $\pi$ from \eqref{eq:shadow_equations}, Proposition~\ref{prop:floss-works} formalizes that sampling clients with weights $1/\pi$ (rather than sampling uniformly at random) at each step of FL, does in fact minimize the true unobserved risk.
\begin{proposition}
    Under the assumptions of Figure~\ref{fig:m-dags}(b), model updates using gradients from observed devices obtained by weighted sampling using weights $1/\pi$ approximate minimization of the true unobserved risk $E[L(\theta)^{miss}]$  as $n\to\infty$.
    \label{prop:floss-works}
\end{proposition}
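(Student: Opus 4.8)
The plan is to reduce the claim to the standard consistency argument for inverse-probability weighting (IPW), with the one genuine complication being that the conditioning variable $S^{miss}$ is itself partially missing. First I would establish the conditional independence that licenses the reweighting, by reading off d-separation from the m-DAG in Figure~\ref{fig:m-dags}(b): conditioning on $\{D', S^{miss}\}$ blocks every path between $R$ and $\{X,Y\}$, since the forks at $D'$ and the chains $X \rightarrow S^{miss} \rightarrow R$ and $Y \rightarrow S^{miss} \rightarrow R$ all pass through the conditioning set, and the lone collider path $R \leftarrow D' \rightarrow Y \leftarrow X$ is already cut at the $D'$ fork. Hence $R \ci \{X,Y\} \mid D', S^{miss}$, and since $L(\theta)^{miss}$ (and its gradient) is a deterministic function of $X, Y$ and $\theta$, we also obtain $R \ci L(\theta)^{miss} \mid D', S^{miss}$. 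This is exactly the ``missing at random given $S$'' structure that IPW requires, and it is the graphical payoff of introducing the satisfaction mediator $S^{miss}$.

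Given this independence, I would prove the IPW identity by iterated expectation. Conditioning on $(D', S^{miss}, X, Y)$ renders both $L(\theta)^{miss}$ and $\pi = p(R=1\mid D', S^{miss})$ measurable, while $E[R \mid D', S^{miss}, X, Y] = E[R \mid D', S^{miss}] = \pi$ by the independence above; therefore $E[(R/\pi)\, L(\theta)^{miss} \mid D', S^{miss}, X, Y] = L(\theta)^{miss}$, and the outer expectation gives $E[(R/\pi)\, L(\theta)^{miss}] = E[L(\theta)^{miss}]$. Taking $L \equiv 1$ in the same argument yields $E[R/\pi] = 1$. Next I would connect this population identity to the sampling scheme of Algorithm~\ref{alg:floss}: drawing users from $\U_R$ with probabilities proportional to $1/\pi$ and averaging produces the self-normalized (Hajek) estimator $\big(\tfrac1n\sum_{i=1}^n \tfrac{R_i}{\pi_i} L_i\big)\big/\big(\tfrac1n\sum_{i=1}^n \tfrac{R_i}{\pi_i}\big)$, whose numerator converges to $E[L(\theta)^{miss}]$ and denominator to $1$ by the law of large numbers. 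The same reasoning applied to gradients shows the expected reweighted gradient converges to $\nabla E[L(\theta)^{miss}]$, so as $n\to\infty$ the updates target the true risk, in contrast with the biased complete-case limit $E[L(\theta)^{miss}\mid R=1]$ identified in Proposition~1.

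I expect the main obstacle to lie entirely in justifying that $\pi$ is itself recoverable, because $\pi = p(R=1\mid D', S^{miss})$ is not a complete-case functional: $S^{miss}$ is missing whenever $R=0$ (and sometimes even when $R=1$), so the naive fit of $p(R=1\mid D', S^{miss})$ on responsive users is unavailable and the MNAR character resurfaces. Here I would invoke the shadow-variable machinery: under assumptions (i) $Z \not\ci S^{miss}\mid R, D'$ and (ii) $Z \ci R \mid S^{miss}, D'$, the results of \cite{miao2024identification, chen2023causal} guarantee, via a completeness condition that makes the moment conditions pin down the parameter, that the estimating equations \eqref{eq:shadow_equations} admit a unique solution $\beta$ identifying $\pi$ from observed data alone. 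The IPW identity and the law-of-large-numbers convergence are routine; it is this identifiability step that actually consumes the structural assumptions of Figure~\ref{fig:m-dags}(b). With consistency of the estimated $\hat\pi$ in hand, a Slutsky-type argument replaces $\pi$ by $\hat\pi$ in the Hajek estimator without altering the limit, completing the proof.
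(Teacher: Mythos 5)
Your proposal is correct and follows essentially the same route as the paper's proof sketch: recast weighted sampling as IPW-weighted empirical risk minimization, apply the law of large numbers to get convergence to $E\big[\tfrac{R\, L(\theta)^{miss}}{\pi}\big]$, and equate this to $E[L(\theta)^{miss}]$ under the assumptions of Figure~\ref{fig:m-dags}(b). The only difference is one of detail, not of approach---you explicitly prove the d-separation $R \ci \{X,Y\} \mid D', S^{miss}$ and the iterated-expectation identity, and handle self-normalization and plug-in of $\hat\pi$ via Slutsky, whereas the paper delegates the key identity and the identifiability of $\pi$ to the cited shadow-variable results of \cite{chen2023causal, miao2024identification}.
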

\begin{proof} (Sketch) This is equivalent to solving an empirical risk minimization problem with risk $\frac{1}{n}\sum_{i=1}^n \frac{R_iL(x_i, y_i \theta)^{miss}}{\pi_i}$, which converges to $E[\frac{R\cdot L(\theta)^{miss}}{\pi}]$ as $n\to\infty$. This in turn is equal to $E[L(\theta)^{miss}]$ under the assumptions of Figure~\ref{fig:m-dags}(b) \cite{chen2023causal, miao2024identification}.
\end{proof}

Pseudocode for our system FLOSS that incorporates this idea is shown in Algorithm~\ref{alg:floss}. The weighted sampling occurs in line~9, providing robustness to missingness that may occur in lines~4 and 12 due to user opt-out and stragglers respectively. The provided pseudocode also incorporates differentially private stochastic gradient descent, as in \cite{abadi2016deep}.

\begin{figure}[t]
    \centering
        \includegraphics[scale=0.5]{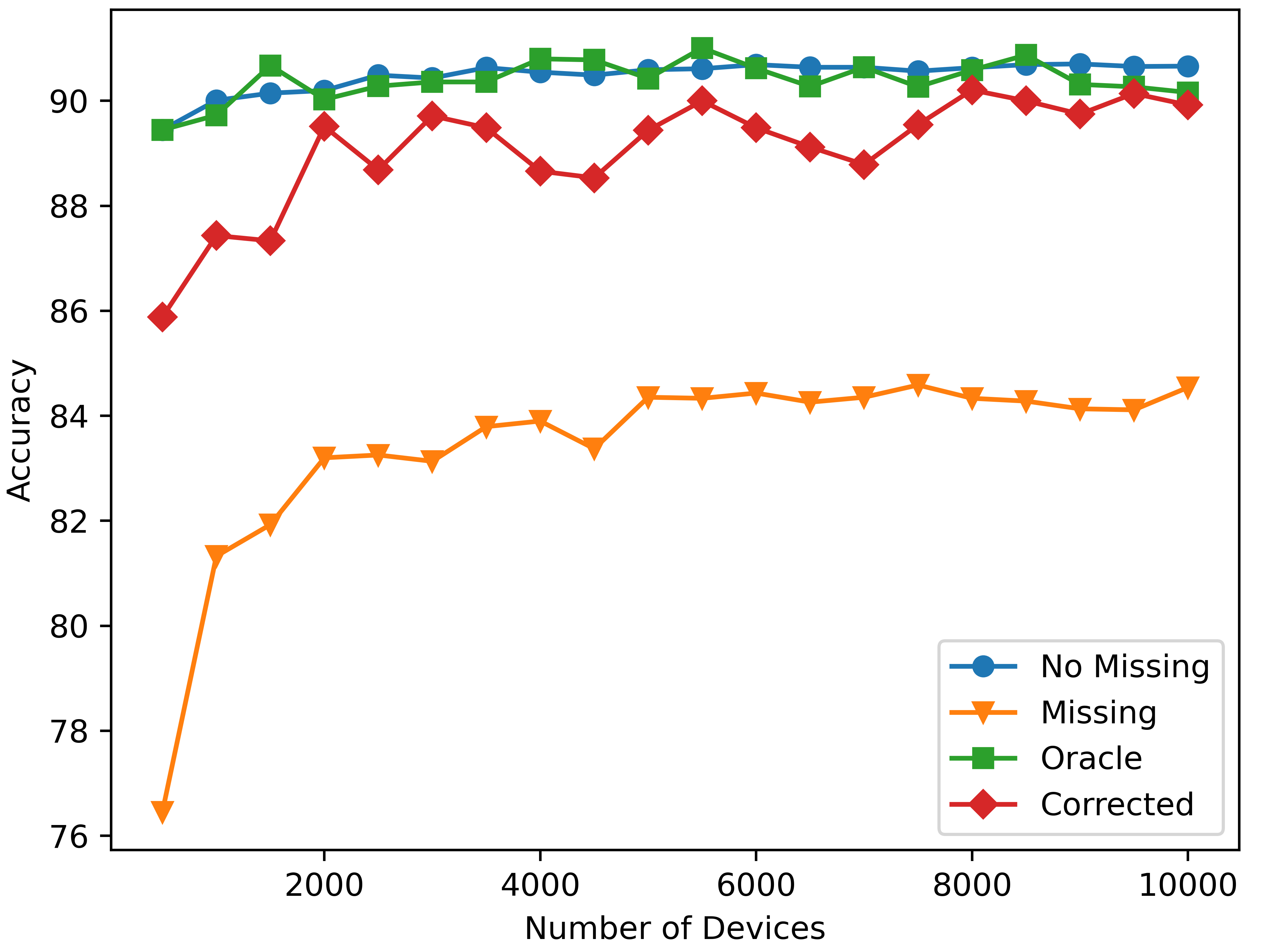}
\caption{\small Accuracy of FL with/without MNAR correction. \vspace{0em}}
\label{fig:results}
\end{figure}

\section{Preliminary Results \& Discussion}
\label{sec:experiments}

We implemented FLOSS in Python with a more robust implementation in Flower~\cite{flower} currently underway.  FLOSS runs in three different modes to simulate the effects of missingness. The server can run without missing data, where clients participate regardless of their response value $R$. It can run with missing data, where we allow clients to probabilistically opt out of training without any corrections. Finally, FLOSS can run with ``corrected'' missing data, where clients can probabilistically opt out, but we use correction techniques to mitigate the effects of the missing data.

We ran experiments to validate our theoretical results, as shown in Figure~\ref{fig:results}.  For differing numbers of simulated clients, we measure the average accuracy of a model trained on a binary classification task with no missing data (blue line), MNAR data (orange line), MNAR data with oracle correction (green line), and MNAR data with FLOSS (red line). The oracle correction assumes we know the true probability of a client opting out.  From these results, we conclude that not correcting for MNAR data negatively impacts the accuracy of the model, even for a relatively simple task. Additionally, we note that the correction from FLOSS closely mimics the no missing data case as we increase the number of clients. Further, adding more clients does not improve model accuracy unless missingness is taken into account, as seen by the gap in the orange and red lines. Thus, our results demonstrate that FLOSS is able to reduce the degradation of performance when MNAR data are present.

Now that we have a working prototype implementation, our next step is to run diverse experiments using our Flower implementation of FLOSS.  The Flower framework allows us to extensively evaluate the performance of FLOSS using larger datasets in more realistic settings.  We will also quantify the computational overhead of Algorithm 1 to show how feasible FLOSS is for real world deployments. Finally, we will experimentally show how FLOSS scales to evaluate the overall generalizability and accuracy of our approach.  

\section{Conclusion}
\label{sec:conclusion}
Though we discussed concepts from the perspective of supervised ML, they apply equally well to generative models.  While other assumptions may be possible for handling MNAR data in FL systems, our goal was to formalize the issues and provide an example framework with a plausible set of assumptions that can be built and expanded upon in future work. We have further demonstrated promising empirical results of our prototype system FLOSS, and hope this opens new areas of research into robust FL systems that tolerate real-world complications of missing data.

\begin{acks}
RB acknowledges support from the NSF CRII grant 2348287. The content of the information does
not necessarily reflect the position or the policy of the Government, and no official endorsement
should be inferred.
\end{acks}

\bibliographystyle{ACM-Reference-Format}
\bibliography{references}

\end{document}